\newtheorem{prop}{Proposition}
\title{\textbf{Pick Your Battles:  Interaction Graphs\\ 
as Population-Level Objectives for Strategic Diversity}}
\author[1]{Marta Garnelo\thanks{Corresponding Author: garnelo@deepmind.com}}
\author[1]{Wojciech Marian Czarnecki}
\author[1]{Siqi Liu}
\author[1]{Dhruva Tirumala}
\author[1]{Junhyuk Oh}
\author[1]{Gauthier Gidel}
\author[1]{Hado van Hasselt}
\author[1]{David Balduzzi}
\affil[1]{DeepMind}
\date{}
\begin{document}

\maketitle

\begin{abstract}
Strategic diversity is often essential in games: in multi-player games, for example, evaluating a player against a diverse set of strategies will yield a more accurate estimate of its performance. 
Furthermore, in games with non-transitivities diversity allows a player to cover several winning strategies.
However, despite the significance of strategic diversity, training agents that exhibit diverse behaviour remains a challenge.
In this paper we study how to construct diverse populations of agents by carefully structuring how individuals within a population interact. 
Our approach is based on \emph{interaction graphs}, which control the flow of information between agents during training and can encourage agents to specialise on different strategies, leading to improved overall performance. 
We provide evidence for the importance of diversity in multi-agent training and analyse the effect of applying different interaction graphs on the training trajectories, diversity and performance of populations in a range of games. This is an extended version of the long abstract published at AAMAS~\citep{garnelo2021pick}.
\end{abstract}

\section{Introduction}
Most interesting real-world games and tasks involve separate and potentially competing objectives (are multi-agent in nature) and do not admit a single winning strategy that beats all (i.e. have some non-transitive element).
Training agents that master these types of games poses several challenges. For example, in multi-agent games performance is only defined in relation to other players rather than absolutely. Additionally, as a result of the non-transitive nature of these games, performance against one opponent can often be uninformative or even misleading about performance against other opponents. 

In this paper we show that a number of the challenges associated with non-transitive multi-agent environments can be addressed by ensuring the strategy space of our agents is maximally diverse. 
At a higher level, agents that are strategically diverse are able to estimate other agents' performance more accurately, are better opponents to train other agents against and, overall, perform better at test time.

Given the importance of strategic diversity we thus pose the question of how to methodically train such diverse agents. 
A key observation is that training the same agent against different opponents results in varying exploration behaviour during training as well as different final performance of said agent. 
We use this insight to our advantage and train strategically diverse agents by systematically selecting the opponents that an agent encounters during training.

We consider two-player games and train populations of agents by setting each agent against a specific subset of the rest of the population.
In order to characterise these different sets of opponents we introduce the notion of \emph{interaction graphs}, which describe the training interactions between agents in a population.
Depending on the properties of the graph the resulting population will exhibit varying levels of strategic diversity and performance.
We study this effect for a number of different graphs on a modified version of Rock-Paper-Scissors, Blotto and Starcraft.

 It is important to note, that the setup analysed here is qualitatively different from a standard approach, where a best response is found with respect a distribution of previously trained (fixed) agents~\cite{lanctot2017unified,vinyals2019grandmaster}. Instead, interaction graph describes a matchmaking schedule for co-training players, and thus imposes a continuous dynamical system of their evolution.

The contributions of this paper are threefold:\begin{enumerate}
\item We provide evidence of the importance of diversity at various levels of training in multi-agent, non-transitive games.
\item We introduce the interaction graph framework to describe the control of training interactions in populations.
\item We analyse the effect of different population graphs on the resulting populations for three different games. 
\end{enumerate} 

\section{Related work}

The research in this paper shares a lot of the ideas with early work on co-evolution, where agents are trained and evaluated against other agents that are being trained at the same time~\citep{paredis1995coevolutionary, morrison1999general, drezewski2003model}. Within co-evolution, a concept that is particularly related is that of Nash memory~\citep{ficici2003game, oliehoek2006parallel}. One could think of our approach as an amortised version of Nash memory with a single fixed-sized memory of strategies. In recent years co-evolution has gained a lot of interest in the context of population based training~\citep{jaderberg2019human, liu2018emergent, mckee2020diverse}. 

Also related to our line of work is research on diversity in reinforcement learning, which is often studied in the context of intrinsic motivation and skill discovery~\citep{eysenbach2018diversity, florensa2016stochastic, hausman2018learning, gregor2017variational}. The ability to discover diverse sets of options has also recently been considered from a generalisation and meta-learning point of view~\citep{li2018learning, frans2017meta, finnone}. The notion of strong diversity that we are interested in has also been studied in evolutionary computation under the name of quality diversity~\citep{pugh2016quality}.

Making use of graphs to capture interactions between agents in a population has also been proposed by Guestrin in the context of coordination graphs~\citep{guestrin2002coordinated}. However, this method assumes a graph of interactions between agents in a population is known and makes use of it to approximate a global value function. Our approach on the other hand only models separate value functions and does not require prior knowledge about the roles of agents in a populations or the structure of their interactions.

Finally, some of the graphs we suggest can be seen as co-training version of some previously introduced training scheduling algorithms like Self-Play~\citep{silver2017mastering}, PSRO~\citep{lanctot2017unified} and more recent work on PSRO rectified Nash~\citep{balduzzi2019open}. 
As such interaction graphs can be seen as a unifying language to describe agent matching algorithms in population-based learning.

\section{Definitions}

\label{sec:definitions}

\paragraph{\textbf{Populations of agents}} In this paper we focus on population-based methods. 
Our general setup consists of fixed-sized populations $\mathbf{P}$ of deterministic reinforcement learning (RL) agents that we train through pairwise interactions on two-player games.
From a game-theoretic point of view these deterministic agents represent strategies and mixtures of strategies can therefore be obtained as mixtures of agents. 

\paragraph{\textbf{Empirical evaluation matrix }} Following ~\citep{balduzzi2019open}, we define the antisymmetric function $\phi(\mathbf{v}, \mathbf{w}) = -\phi(\mathbf{w}, \mathbf{v})$ that evaluates the performance of any pair of opponents $\mathbf{v}$ and $\mathbf{w}$ on zero-sum two-player games. We say $\mathbf{v}$ beats $\mathbf{w}$ if $\phi(\mathbf{v}, \mathbf{w})>0$ and we capture all pairwise evaluations between agents of a population $\mathbf{P}$ and a second population $\mathbf{P'}$ in the empirical evaluation matrix $\mathbf{A}_{P, P'} = \{\phi(\mathbf{v}, \mathbf{w})\,,\; \mathbf{v} \in \mathbf{P}, \mathbf{w} \in \mathbf{P'}\}$. 

\paragraph{\textbf{Relative population performance }} From the evaluation matrix we can calculate the \emph{relative population performance} of $\mathbf{P}$ against $\mathbf{P}'$ as: $$u(\mathbf{P}, \mathbf{P}') = \mathbf{p} ^ \intercal \cdot \mathbf{A}_{\mathbf{P}, \mathbf{P}'} \cdot \mathbf{q}$$ where $\mathbf{p}$ and $\mathbf{q}$ are the Nash equilibria (see appendix for definition) for $\mathbf{P}$ and $\mathbf{P}'$ calculated from $\mathbf{A}_{\mathbf{P}, \mathbf{P}'}$.

\paragraph{\textbf{Effective diversity }} There are typically far more ways of acting incompetently than there are of acting effectively. Therefore, it is necessary to couple the notion of diversity with a measure of effectiveness.
Effective diversity~\citep{balduzzi2019open} measures the `useful' diversity of a population of agents $\mathbf{P}$ and is defined as: $$d(\mathbf{P}) = \mathbf{p} ^ \intercal \cdot {\mathbf{A}_{\mathbf{P, P}}}_+ \cdot \mathbf{p}$$ where $\mathbf{A}_{\mathbf{P, P}}$ is the payoff matrix between all of the agents in $\mathbf{P}$ against each other and $\mathbf{p}$ is the Nash equilibrium of $\mathbf{A}_{\mathbf{P, P}}$. Computing the rectifier ${\mathbf{A}_{\mathbf{P}}}_+$ implies that effective diversity is, roughly, a measure of the variety of ways agents can beat each other. Using the Nash distribution -- rather than, say, the uniform distribution -- ensures that effective diversity is relative to the best performing or least exploitable agents. In particular, if there is a single dominant agent then effective diversity is zero.

\paragraph{\textbf{Effective diversity in non zero-sum games}}

We can extend the effective diversity measure to non zero-sum games by subtracting the relative population performance from the payoff matrix first: 
$$d(\mathbf{P}) = \mathbf{p}^\intercal \cdot  {\mathbf{A}_{\mathbf{P, P}}-u(\mathbf{P}, \mathbf{P})}_+ \cdot \mathbf{q}.
$$ Intuitively, this measure `counts' the number exploits (payoffs above the relative performance of the population) that occur in the Nash.

\section{Strategic Diversity and Multi-Agent training}

Consider the task of training an agent that will compete in an online gaming league. In multi-agent environments we can distinguish between three functionally different roles that will be involved in training such an agent.

\begin{enumerate}
    \item \textbf{Learner}: this is the agent we are ultimately interested in and that will compete online. Our goal is to learn the right parameters for this agent from interactions with other agents.
    \item \textbf{Trainers}: a single agent or a set of agents that play against the learner to generate the experience that the learner uses for its updates. For example, a set of previously trained agents or human players.
    \item \textbf{Evaluators}: a single agent or a set of agents that are used to evaluate the performance of the agent. In the case of the online game this could be all other players worldwide, who the learner agent is ranked against.
\end{enumerate}

Given that in this paper we will be working with populations we will refer to these as \emph{learner population}, \emph{trainer population} and \emph{evaluator population} instead of as agents from now on, but the same principles hold.
While these three are functionally different, in practice a population can carry out several roles (often the trainers are used to evaluate the agent and in some cases the trainers are learning themselves). In the following we set out to prove that strategic diversity is necessary across \emph{all three types of populations} in order to obtain strong final performance of the learner population.

\subsection{Why evaluators should be strategically diverse}
\label{sec:evaluation}
In multi-player games the outcome is a function of several players. 
As such, we can only evaluate the performance $\phi(\mathbf{v}, \mathbf{w})$ of an agent or population of agents $\mathbf{v}$ relative to an opponent $\mathbf{w}$.
One way to define a measure of absolute performance $f(\mathbf{v})$ is to evaluate $\mathbf{v}$ against the entire agent space $A$ and choose the worst-case performance. 

\begin{equation}\label{eq:perf}
  \text{Performance:} \;  f_{A}(\mathbf{v}) := \min_{\mathbf{w} \in A} \phi(\mathbf{v}, \mathbf{w}) \,.
\end{equation}

This performance metric implies the learning goal is to find the least exploitable population, i.e. the population that maximises the objective~\eqref{eq:perf} (as is the case in the context of the mini-max theorem in game theory).

In practice we can only approximate $A$ by comparing the performance of population $v$ to a finite set of evaluator populations $\mathbf{P}^*$.
A diverse evaluator population $\mathbf{P}^*$ may contain approximate best response to $\mathbf{v}$, i.e., for any given $\mathbf{v}$ there exists an agent in $\mathbf{P}^*$ than can exploit $\mathbf{v}$ approximately as well as any agent of the entire space $A$.
The quality of the approximate evaluation of $\mathbf{v}$ against $\mathbf{P}^*$ can be viewed as a distance in functional space $d(f_{\mathbf{P}^*}, f_{A})$.
As such, in order to be able to estimate the performance of our populations correctly, our goal is to find a $\mathbf{P}^*$ that minimises it.
We therefore argue that strategic diversity of the evaluator population is instrumental for something as fundamental as \emph{defining} the performance of the learner population in the first place. 

\subsection{Why trainers should be strategically diverse}

If our goal is to train agents that perform well according to the notion of performance introduced in~\eqref{eq:perf} then we aim at solving a maxi-min problem of the form
\begin{equation}
    \max_{w \in A} \min_{v \in A} \phi(w, v) =: \max_{w \in A} f_{A}(w)  \,.
    \label{eq:best_response}
\end{equation}
This optimization problem is challenging as it has been shown that standard gradient methods may not converge even on very simple domains~\citep{balduzzi2018mechanics}. 
One way to contravene this issue is to consider the training of a main agent against a fixed population,
\begin{equation}\label{eq:min-max_popu}
    \max_{w \in A} \min_{v \in \mathbf{P}^*} \phi(w, v) =: \max_{w \in A} f_{\mathbf{P}^*}(w) \,.
\end{equation}
As argued in Section~\ref{sec:evaluation}, with a diverse enough population the latter objective is a reasonable approximation of the original optimization problem~\eqref{eq:best_response}. Moreover, almost everywhere, the gradients of $f_{\mathbf{P}^*}$ can be computed in a tractable way by using the identity
\begin{equation}
    \nabla  f_{\mathbf{P}^*}(w) =  \nabla_w \phi(w,\hat v(w)) \notag
\end{equation}
where $\hat v(w) \in   \mathbf{P}^*$ is a best response against $w$. 
Assuming that the payoff is differentiable, standard gradient based algorithms can be used to find an approximate solution of~\eqref{eq:best_response}. 
This type of population-based training is called prioritised fictitious self-play and was introduced by~\citep{vinyals2019grandmaster} to successfully train agents to play the game of StarCraft II. The particular instance of this method used by~\citep{vinyals2019grandmaster} solves a smooth version of~\eqref{eq:min-max_popu} where the agents sampled according to a soft-max of their performance against a `main agent'. \citep{vinyals2019grandmaster} put a major emphasis in obtaining a diverse population of `main agents' and `exploiters' to successfully train the `main agents' via prioritised fictitious self-play. 

\subsection{Why learners should be strategically diverse}
\label{sec:learner}
The need for learners to be strategically diverse is a result of the cyclic nature of non-transitive games.
A game is non-transitive if, whenever player $\mathbf{v}$ beats player $\mathbf{w}$ and $\mathbf{w}$ beats player $\mathbf{u}$, it does not follow that $\mathbf{v}$ beats $\mathbf{u}$ (rock, paper, scissors is perhaps the best-known example of a non-transitive game). 
These games involve strategic trade-offs: following a particular strategy will beat a subset of the remaining strategies but also lose to a different subset. 
As there is no "best" strategy, it is necessary for a population to cover a diverse set of strategies in order to perform well against the evaluator population at test time.

\begin{figure}[h]
\centering
\includegraphics[width=0.7\columnwidth]{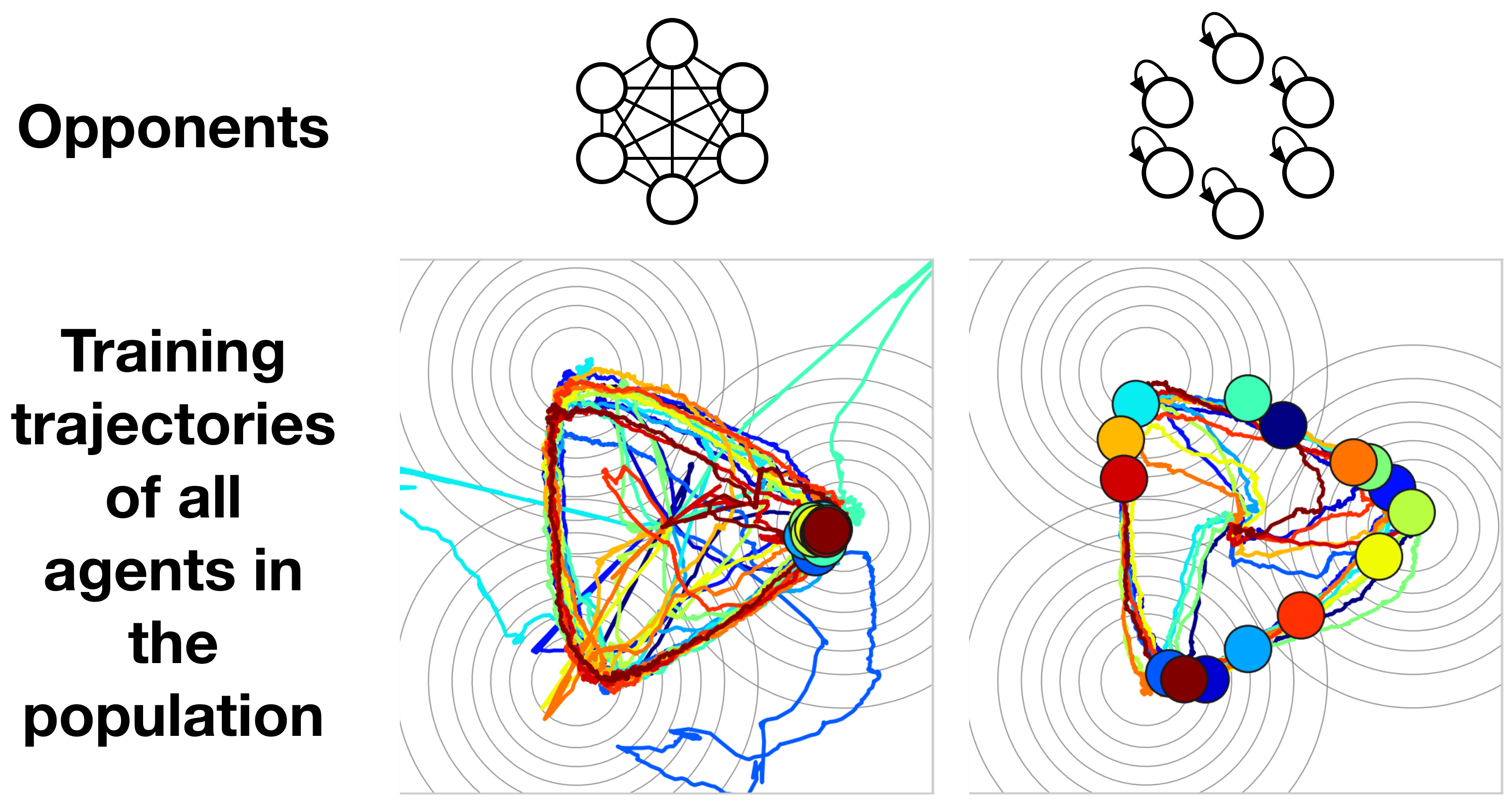}
\caption{\textbf{The effect of varying an agent's opponents on its training trajectory.} We train two populations with 15 agents each on a small cyclic game with three winning strategies (indicated by the three Gaussians in the plot). For one population we train all agents against all other agents in the population, while for the second population every agent is only trained against itself. Each line in the plot corresponds to the evolution of the policy of a single agent over training plotted in the 2D action space. The policy of each agent at the last time step is marked by a circle to visualise that some of the agents are moving through strategy space together while other's move independently.}
\label{fig:diversity_importance}
\end{figure}

\section{Interaction Graphs}

\subsection{Influencing agent training in multi-player games}
\label{sec:constructing}

Suppose we have a reinforcement learning agent $\mathbf{v}$ whose policy is parametrised by some non-linear function approximator $\pi_{\theta}^{\mathbf{v}}$ with parameters $\theta$.
The parameter updates of $\pi_{\theta}^{\mathbf{v}}$ are computed from the rewards $\mathbf{r}$ collected by $\mathbf{v}$.
In the case of multi-agent environments this reward will be some function $\mathbf{r} \sim g(\mathbf{\pi_{\theta}^{\mathbf{v}}}, \mathbf{\pi_{\theta}^{\mathbf{w}}})$ of the agent's policy $\mathbf{\pi_{\theta}^{\mathbf{v}}}$ as well as that of its opponent $\mathbf{\pi_{\theta}^{\mathbf{w}}}$. 
As a result, there are two ways of influencing the parameter updates of $\mathbf{v}$: (1) at an individual agent level via the parametrisation and update rule of the policy $\pi_\theta^\mathbf{v}$ or (2) at a population level by choosing different opponents $\pi_{\theta}^{\mathbf{w}}$.

In this paper we opt for the second population-level approach and define \emph{structured population-level objectives} via interaction graphs that specify the objectives of each agent in the population in terms of (mixtures of) other agents.
As a small motivating example of how strong the effect of training against different opponents can be on an agent we carry out the following experiment: we train two populations of RL agents on a non-transitive game.
Each of the agents is trained either against every other agent in the population (setting 1) or only against itself (setting 2).
We can visualise the different policies that the agents learn during training as trajectories in a 2D strategy space (Fig~\ref{fig:diversity_importance}).
As shown in the plots different sets of opponents lead to radically different exploration behaviours of the populations.
In the case of multi-agent training it is therefore imperative to consider the population-level interactions in addition to the agent-level objectives.

\subsection{Interaction graphs}
We have argued that high effective diversity is a desirable property of learner, trainer and evaluator populations.
Our goal is thus to develop systematic tools for training such diverse populations.
In practice, when training a new learner population one often does not have access to a trainer population.
In these cases the learners themselves are used as a trainers. 
Training every agent in a population against all other agents, however, would mean all agents have the same objective, which is not conducive to diverse behaviour.

Instead, we propose to train each agent against a specific subset of the agents in the same population that will act as that agent's particular trainer population.
We express these relations via interaction graphs.
The nodes of the interaction graph correspond to the agents of the population and the weights of the edges indicate to what extent the experience obtained against another agent $\mathbf{w}$ in the population is used to update the parameters of agent $\mathbf{v}$.
Crucially, the graph can be directed, meaning that $\mathbf{v}$ might care about beating $\mathbf{w}$, while the reverse might not be the case.
This allows for specialisation.
In addition the weights of the edges can change dynamically over training, such that the agents' objectives can be adapted to the strategies covered by the population.
Previous population-based training regimes can be expressed as a graph as well (all-to-all, self-play, PSRO etc). 
Other lines of research that are related to the concenpt of interaction graphs include~\citep{kearns2013graphical, lanctot2017unified, silver2018general}.

\subsection{Restricting the Information Flow in Populations}

By choosing only a subset of the population as the trainer for each agent we are restricting the information flow between the agents of the population.
In the following we show why this restriction can be beneficial for games that have non-transitive strategy spaces.

Suppose that $\phi(\mathbf{v}, \mathbf{w})$ is differentiable, and introduce the shorthand $\phi_{w}(\mathbf{v}) := \phi(\mathbf{v}, \mathbf{w})$ to emphasise that the weights of the opponent $\mathbf{w}$ are fixed and only $\mathbf{v}$ is to undergo training. Consider the gradients $\nabla_{\mathbf v} \phi_{\mathbf{w}_j}({\mathbf v})$ obtained against a set of opponents $\mathbf{w}_1,\ldots, \mathbf{w}_k$. Further, we say a game is \textbf{monotone} if $\phi$ factorises as $\phi(\mathbf{v}, \mathbf{w}) = \sigma(f(\mathbf{v}) - f(\mathbf{w}))$ where $f$ is a rating function that specifies the skill of each agent, and $\sigma$ is a monotone increasing function. If a game is monotone, then the performance of agents simply comes down to the difference in their ratings. The Elo rating system, widely used in Chess, assumes monotonicity.

\begin{prop}
    If the game is monotone then gradients against all opponents have nonnegative inner product with one another:
    \begin{equation}
        \Big\langle\nabla_{\mathbf v} \phi_{\mathbf{w}_j}({\mathbf v}),\nabla_{\mathbf v} \phi_{\mathbf{w}_k}({\mathbf v})\Big\rangle \geq 0.
    \end{equation}
    The inner product is zero if and only if there $\mathbf{v}$ is at a local maximum of the rating function $f$.
\end{prop}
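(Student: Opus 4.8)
The plan is to unwind the definition of monotonicity with a single application of the chain rule, which reveals that every opponent gradient is a \emph{nonnegative scalar multiple of one and the same vector}, namely $\nabla_{\mathbf v} f(\mathbf v)$. This collinearity is really the heart of the statement, and everything else is bookkeeping.

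First I would write, for each $j$,
\begin{equation}
\nabla_{\mathbf v}\phi_{\mathbf w_j}(\mathbf v) = \nabla_{\mathbf v}\,\sigma\big(f(\mathbf v) - f(\mathbf w_j)\big) = \sigma'\big(f(\mathbf v) - f(\mathbf w_j)\big)\,\nabla_{\mathbf v} f(\mathbf v),\notag
\end{equation}
using that the opponent's rating $f(\mathbf w_j)$ is constant with respect to $\mathbf v$ (the opponent's weights are held fixed, as emphasised by the shorthand $\phi_{\mathbf w_j}$), and that the composition is differentiable since $\phi$ is assumed to be. Writing $c_j := \sigma'\big(f(\mathbf v) - f(\mathbf w_j)\big)$, monotone increasingness of $\sigma$ gives $c_j \ge 0$. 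Then the inner product factorises,
\begin{equation}
\big\langle \nabla_{\mathbf v}\phi_{\mathbf w_j}(\mathbf v),\, \nabla_{\mathbf v}\phi_{\mathbf w_k}(\mathbf v)\big\rangle = c_j\, c_k\, \big\| \nabla_{\mathbf v} f(\mathbf v)\big\|^2 \ge 0,\notag
\end{equation}
which is the claimed inequality; in fact the opponent gradients are all positive multiples of a single vector, a strictly stronger conclusion than pairwise nonnegativity.

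For the equality case I would argue that, assuming $\sigma' > 0$ (true for the logistic $\sigma$ underlying Elo, and implicit in reading "monotone increasing" as strict), the product $c_j c_k \| \nabla_{\mathbf v} f(\mathbf v)\|^2$ vanishes exactly when $\nabla_{\mathbf v} f(\mathbf v) = 0$, i.e. when $\mathbf v$ is a stationary point of the rating function $f$ — which is the sense in which the proposition's "local maximum of $f$" is meant (and is literally a local maximum under the mild extra assumption that $f$ is locally concave in the relevant regime, as for typical bounded rating parametrisations).

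I do not expect a genuine obstacle here, since the argument is essentially a one-line chain rule; the only point requiring care is the equality direction. If $\sigma$ were allowed to be merely non-strictly increasing, the inner product could also be forced to zero by $\sigma'$ vanishing at one of the arguments $f(\mathbf v) - f(\mathbf w_j)$ with $\nabla_{\mathbf v} f(\mathbf v) \neq 0$, so I would make the strict-monotonicity hypothesis on $\sigma$ explicit, after which the stated characterisation is immediate.
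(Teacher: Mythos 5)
Your proof is correct and follows exactly the same route as the paper's: a single application of the chain rule showing each opponent gradient is $\sigma'\big(f(\mathbf v)-f(\mathbf w_j)\big)\,\nabla_{\mathbf v} f(\mathbf v)$, hence all gradients are nonnegative multiples of a common vector. You are in fact more careful than the paper on the equality case (the paper just asserts ``the result follows''), and your observation that the zero--inner--product condition really characterises stationary points of $f$ rather than local maxima, and that strict monotonicity of $\sigma$ is needed, is a legitimate sharpening of the stated proposition.
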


\begin{proof}
    By the chain rule, $\nabla_{\mathbf v} \phi_{\mathbf{w}_j}({\mathbf v}) = \sigma'(f(\mathbf{v}) - f(\mathbf{w}_j)\cdot \nabla_{\mathbf v} f(\mathbf{v})$. Since $\sigma$ is monotone increasing, we have that $\sigma'$ is always positive, and the result follows.
\end{proof}

Proposition~1 shows that there are no strategic tradeoffs in monotone games. It makes little difference which opponent you train against, because the gradients all point in the same direction: infinitesimally improving against one opponent leads to infinitesimal improvement against all. The only difference arises from the magnitude of $\sigma'$. For example, if $\sigma$ is the sigmoid, then training against much better opponents does not help because $\sigma$ saturates and the gradient vanishes.

In contrast, in nontransitive games, training with -- and improving performance against -- one opponent can \emph{worsen} performance against other opponents. Roughly, training against rock makes you more like paper and so you perform worse against scissors, see \citep{balduzzi2019open} for a detailed example.
It follows that, in nontransitive games, training against mixtures can cause gradients to cancel out. It is therefore necessary to carefully control which gradients -- and so which opponents -- agents in a population are exposed to during training.

\section{Experiments}

\subsection{Graphs}
\label{sec:graphs}
We compare nine different graph structures to start characterising the effect of restricting the training interactions within populations. We distinguish between fixed and adaptive interaction graphs:

\textbf{Fixed interaction graphs} are defined at the beginning of training and remain fixed throughout:
\begin{enumerate}
    \item \textbf{All-to-all}: A fully connected graph, where every agent of the population trains against every other agent including itself. The objectives of all agents are the same and the information flow between agents is maximal. An example of agent populations trained with this type of interaction graphs is~\citep{jaderberg2019human}
    \item \textbf{Self-play}: every agent in the population trains independently against a past version of themselves. The flow of information is minimal. The AlphaGo agent~\citep{silver2017mastering}, for example, was trained using self-play.
    \item \textbf{Cycle}: One directed cycle of the same length as the number of agents in the population. The motivation behind this type of graph is to reflect the cyclic nature of the game in the training regime.
    \item \textbf{Hierarchical cycle}: All agents except for one are part of a directed cycle and the final agent has directed connections to all. This includes the cyclic structure from 3, and an additional agent that learns best response to everyone in the cycle.
    \item \textbf{PSRO}: All agents in the population are numbered and every agent plays all the agents with a smaller index than itself. The idea behind this hierarchical structure is to have increasing levels of competence~\citep{lanctot2017unified}.
\end{enumerate}

\textbf{Adaptive interaction graphs} start out fully connected and the edges are then continuously updated during training according to some metric such as relative performance against the other agents in the population:
\begin{enumerate}
\setcounter{enumi}{5}
    \item \textbf{Play better:} every agent $\mathbf{v}$ only trains against those agents in the population that it is is losing against as reflected in the payoff matrix.
    \item \textbf{Play worse:} every agent $\mathbf{v}$ only trains against those agents in the population that it is beating according to the payoff matrix.
    \item \textbf{Play worse and self:} Same as play worse, but agents also train against themselves.
    \item \textbf{PSRO Rectified Nash:} introduced in~\citep{balduzzi2019open} and essentially play worse, but only for those agents that have support in the Nash. The rest does self-play.
\end{enumerate}

\begin{figure}[h]
\centering
\includegraphics[width=0.7\columnwidth]{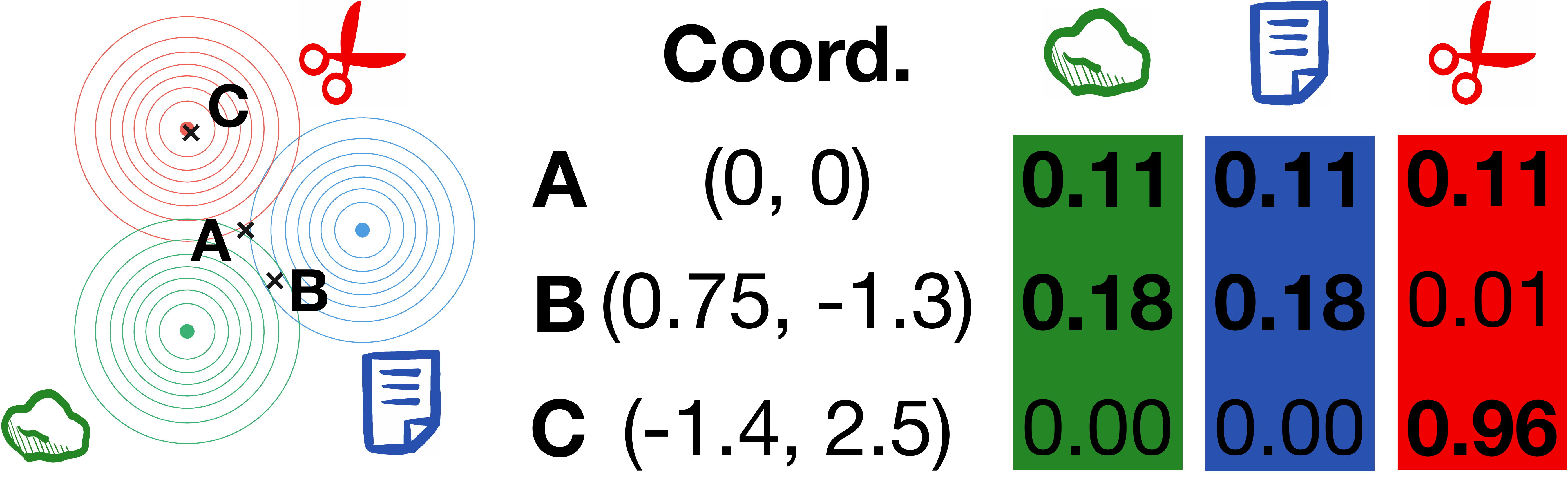}
\caption{\textbf{Diagram of the GMM-RPS game with three sample strategies A, B and C.}
}
\label{fig:gmm_rps}
\end{figure}

\subsection{Environments}

\paragraph{\textbf{GMM-RPS}}
The motivation behind this continuous variant of the classic Rock-Paper-Scissors (RPS) game is to create a simple game that combines cyclic and transitive components.
At a higher level, the game has cyclical dynamics just like RPS.
In addition there is a transitive element of strength: for example, a stronger rock can beat a weaker rock.
The game is illustrated in Fig~\ref{fig:gmm_rps} and for a more detailed description see Section~\ref{sec:rps-gmm} in the appendix. 
We distinguish between GMM-RPS(3), which has 3 modes (like rock, paper and scissors in RPS) and GMM-RPS(7) which has 7 modes and as a result a stronger cyclic component than GMM-RPS(3) as the modes lie closer to each other.

\begin{figure*}[h]
\centering
\includegraphics[width=0.8\textwidth]{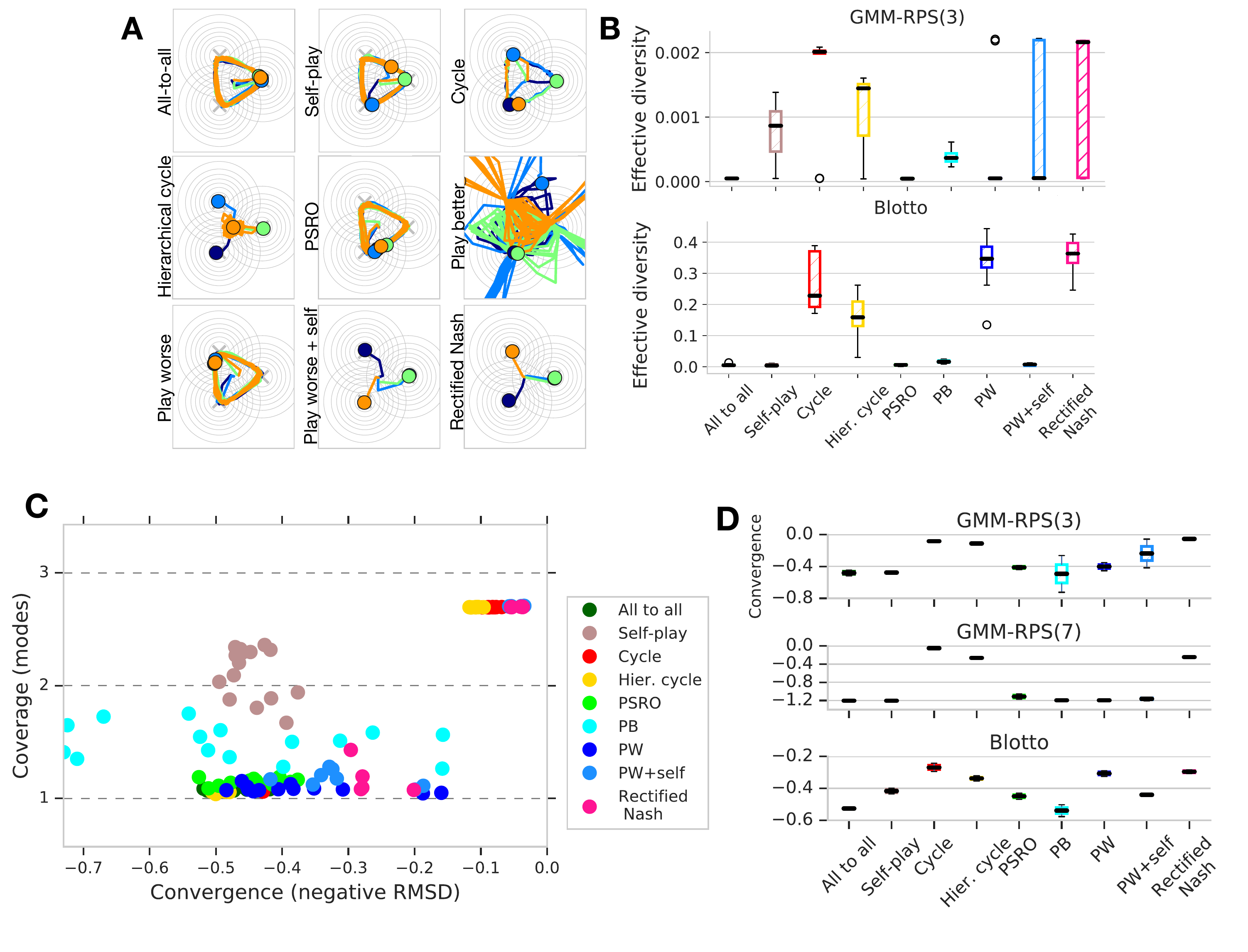}
\caption{\textbf{Training trajectories and effective diversity of populations trained with different interaction graphs}. \textbf{A}: Training trajectories of four-agent populations trained on GMM-RPS using different interaction graphs. Each of the four coloured lines shows the mean policy over training time for each agent. The policy at the last step of training is indicated by a circle for each of the agents. The three grey Gaussians correspond to the rock, paper and scissors modes. \textbf{B}: the effective diversity obtained with the 9 different interaction graphs averaged over 15 populations. C: Convergence of individual agents as measured by the negative RMSD versus coverage of the GMM-RPS(3) action space. In the context of this game coverage corresponds to the number of modes that the agents of a population cover combined. Each dot corresponds to a population and the colour indicates what interaction graph it was trained with. The convergence value is averaged over all four agents of the population. D: The convergence on all 3 games for populations trained using different interaction graphs.}
\label{fig:second}
\end{figure*}

\paragraph{\textbf{Colonel Blotto.}} 
Colonel Blotto is a two-player, zero-sum resource distribution game. Two players are given $k$ tokens that they distribute simultaneously over $a$ areas. Whichever player has more tokens on an area wins that area and the player that wins the most areas wins the overall game. Colonel Blotto is well-studied in game-theory where it's usually of particular interest because of its highly non-transitive strategy space. In this paper we consider a variant of Blotto with $k=100$ and $a=7$.

\paragraph{\textbf{Starcraft.}}
Finally, we include some experiments in the StarCraft II environment~\citep{vinyals2017starcraft}.
StarCraft II is a real-time strategy 2-player game with highly non-transitive game dynamics. 
In addition to being significantly more complex than the previously described games it also has a temporal element that the other two lack.
The motivation behind including this benchmark is to rest whether any of the findings obtained on the simple one-step games translate to more complex temporal games.

\subsection{Evaluation metrics}

In order to characterise the behaviours of populations obtained with different interaction graphs we consider several metrics. On a qualitative level we analyse the training trajectories of the agents by plotting the policy of the agent as it evolves over training iterations. In the case of GMM-RPS the strategies live in 2D space so the plots are easily interpretable. On a quantitative level we measure the effective diversity and relative population performance (RPP) of the different populations. In order to measure the RPP we need an evaluator baseline. We use learned populations with high and low measured effective diversity as well as a `ground truth' population containing all the strategies in the Nash if they are known. Finally, we also measure the convergence of the agents' policies to evaluate whether they get stuck in cycles.

\subsubsection{StarCraft}

Given the complexity of Starcraft we evaluate the populations on different but related metrics. 

\paragraph{\textbf{Performance}}
To measure overall performance, we report maximum (over agents in the population) average win-rate against the test set $\mathcal{T}$. A performance of the population is the maximum of the performances among the population. It can be seen as an RPP metric where the test population is a single agent, that players a uniform mixture of the test set.
\begin{equation}
    \begin{aligned}
\mathrm{perf}_\mathrm{sc2}(\mathcal{P}) &= \max_{a \in \mathcal{P}} \mathbb{E}_{b\sim \mathrm{U}(\mathcal{T})} \mathrm{Prob}[a\text{ beats } b] \\&= \mathrm{RPP}(\mathcal{P}, \{\mathrm{U}(\mathcal{T})\}).
    \end{aligned}
\end{equation}
We use this version of the performance, as none of our population was able to get non-zero win rate against entire test set, and so $\mathrm{RPP}(\mathcal{P}, \mathcal{T}) = 0$ for each $\mathcal{P}$. 

\paragraph{\textbf{Diversity}}
We first define marginal distributions of wheter given units were produced through last 7e8 steps $\phi_\mathrm{units}(a) \in [0,1]^{19}$ (there are $19$ Protoss units in the game of SC2, see Fig.~\ref{fig:sc2_units} in the Appendix) by each agent $a$ in the population $\mathcal{P}$.  Then we use a proxy for diversity, where the distance between two agents $a$ and $b$ is simply the maximum squared distance over units production probability. And to define a population level measure we take minimum of such a distance over all possible pairs of agents.
\begin{equation}
    \begin{aligned}
\mathrm{div}_\mathrm{sc2}(\mathcal{P}) &=  \min_{a \neq b \in \mathcal{P}} \max_u
 [\phi_\mathrm{units}(a)[u] -  \phi_\mathrm{units}(b)[u]]^2
 \end{aligned}
\end{equation}

\paragraph{\textbf{Coverage}}
Given the complexity of the game of SC2 it is really hard to derive a meaningful measure of strategic coverage. However, we argue that in order to be prepared for every possible scenario in the game, at least agents need to perceive every unit in the game. We thus look at a simple proxy, fraction of units that a population as a whole, is creating over series of episodes, to get a rough notion of strategic coverage. We count unit as present, if through last 7e8 training steps it was created at least 5\% of the time.
$$
\mathrm{cov}_\mathrm{sc2}(\mathcal{P}) = 
\tfrac{1}{19} \left | \left \{u : \left [ \tfrac{1}{|\mathcal{P}|} \sum_{a \in \mathcal{P}} \phi_\mathrm{units}(a)[u] \right ] \geq 5\% \right \} \right|.
$$

\subsection{General population training}

We train populations of four A2C agents on the GMM-RPS and Blotto environments.
Every 10000 frames sampled from environment interactions we update the interaction graph which determines the opponents for the actor.
To sample opponents from the graph we first select agent 1 from the population with uniform probability and then sample agent 2 with probability proportional to the weights of the edges that go into the node of agent 1. 
This experience is then only used to update the weights of agent 1.
\subsubsection{StarCraft}
For the StarCraft experiments we use the PySC2 environment~\cite{vinyals2017starcraft}. Each experiment uses a population of 4 agents, trained according to the population graph (as opposed to the setup of \citep{vinyals2019grandmaster} we do not have checkpoints or exploiters). We use exactly the same architecture, action and observation spaces as well as pretraining procedure as \citep{vinyals2019grandmaster}. However, we only trained agents to play one race -- Protoss, and just one map -- Kairos Junction, to limit the game complexity. Each job was ran for 3 billion learner steps.

We use a set of pre-trained simple agents that we refer to as test agents, that were hard coded to follow various strategies, following the description of \citep{vinyals2019grandmaster}.

\section{Results}

\subsection{Training trajectories}
We record the actions produced by the policy at the time of the interaction graph update throughout training. 
These actions can then be plotted on the 2D plane for the GMM-RPS games to visualise how the agents' policies are being updated over time.
We plot the actions of each individual agent as a line and mark the last training step with a circle as shown in Fig.~\ref{fig:second} for each interaction graph on the game of GMM-RPS(3). In order to show the consistency of the observed exploration behaviour across runs we plot four repetitions of a population trained using the same interaction graph in Figure~\ref{fig:big_trajectories} in the Appendix. Overall the behaviour observed for each of the graphs is consistent. We can also see from these plots that the graphs that encourage specialisation (such as rectified Nash or Hierarchical Cycle) can also end in cycling behaviour. In this figure we also show the trajectories for GMM-RPS(7). The cyclic component is stronger in this environment as the different modes are closer together. This is also reflected in the training trajectories as there is a higher number of populations that end up cycling. The graphs that encourage specialisation also display cycling behaviour but rather than cycling together what we observe is that the agents switch modes one at a time, while still covering a wider spread of the modes.

\subsection{Diversity and performance}

We train 15 populations on each of the nine interaction graphs described in section~\ref{sec:graphs}. 
Out of these 15 populations we choose the 10 with the highest effective diversity for each interaction graph.
We calculate the effective diversity for each of the 90 populations using the equation from Section~\ref{sec:definitions} and we take the average over the last 50 logged training iterations.
The effective diversity results for the different interaction graphs on GMM-RPS(3) and Blotto are shown in Figure~\ref{fig:second} and on GMM-RPS(7) in Figure~\ref{fig:gmm-rps7} in the Appendix.

Similarly, we measure the relative population performance as described in section~\ref{sec:definitions}.
When measuring the relative performance, however, we also require an evaluation population. 
As in the previous section we consider three evaluation populations: ground truth, high diversity and low diversity populations.
We only include the ground truth population for the GMM-RPS games as the strategy space of Blotto is too large to enumerate.
We plot the relative population performance against the effective diversity in Figures~\ref{fig:first} for on GMM-RPS(3) and~\ref{fig:gmm-rps7} in the Appendix for on GMM-RPS(7).

\subsection{Convergence vs Coverage}
As discussed in Section~\ref{sec:results}, convergence of the individual agents is not as important for performance as population-level convergence.
This means individual agents may still cycle as long as the population as a whole covers all strong strategies. 
To test this we quantify the coverage of different populations.
For GMM-RPS(3) in particular we measure how many of the Gaussian modes a population covers as a whole.
As shown in Fig~\ref{fig:second}C for this particular game populations that have good coverage tend to consist of agents which have individually converged.
This however is not always the case. 
Populations trained with the `self-play' interaction graph, for example, achieve comparable coverage, despite of all agents cycling throughout training. 
On the other hand there are also populations with high convergence that cover only a single mode.
Furthermore, among the populations with the highest coverage there are varying degrees of convergence.
In these cases most agents have converged, but a few are still cycling (in the case of the `cycle' graph, for example, all agents are slowly cycling but only one at at time, which results in a high effective convergence, despite the fact that agents are still cycling).
We have also plotted the convergence for the three different games (Fig~\ref{fig:second}D). 
As expected, populations that cycle in GMM-RPS(3) cycle even more in GMM-RPS(7) as the cyclic component is stronger.
In addition, the interaction graphs that encourage convergence of individual agents in GMM-RPS also do so in Blotto. 

\begin{figure}[h]
\centering
\includegraphics[width=0.6\columnwidth]{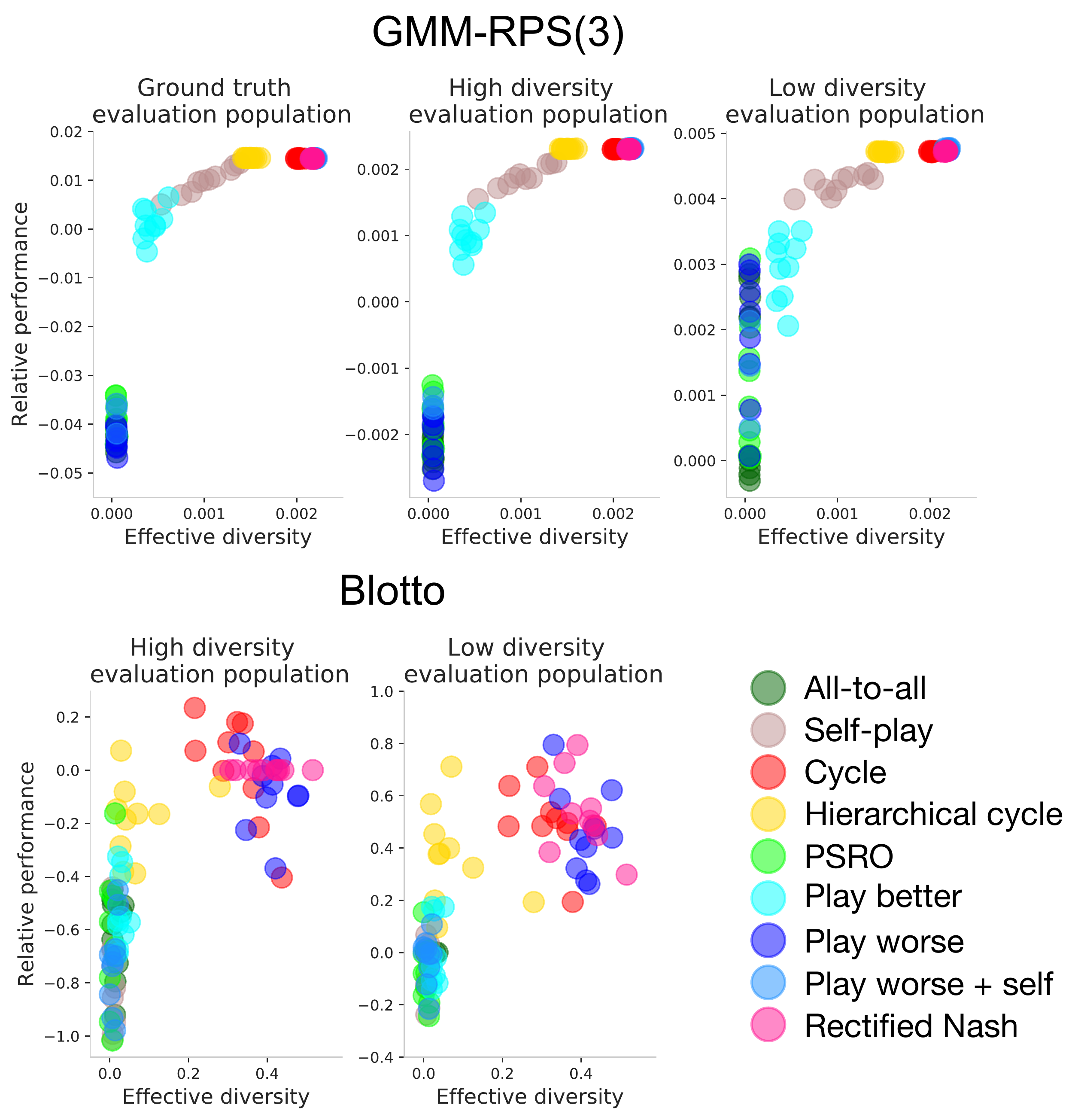}
\caption{\textbf{Relative performance vs effective diversity for populations trained using different interaction graphs}. For each interaction graph we train 10 populations and plot the effective diversity against the RPP averaged over the last training iterations. Each marker corresponds to one population and is colour coded according to the interaction graph.}
\label{fig:first}
\end{figure}

\subsection{Starcraft}
The learning curves of the populations trained on the different interaction graphs are shown in Fig.~\ref{fig:sc2}. 
The graphs that resulted in high performance on the simpler games do not necessarily perform well on the significantly more complex game of StarCraft. 
All-to-all, for example, performs very well, where as a graph as simple as the Cycle does not perform well on a game with high strategic complexity.
The scores for performance as well as diversity and coverage are summarised in Table~\ref{tab:sc2_details}.
From these results we can also see that in the case of StarCraft plain diversity does not directly translate to performance.
This is reflected both in the coverage as well as the diversity numbers that which are not necessarily correlated with the performance scores obtained.

\section{Discussion}
\label{sec:results}

We characterise the differences across all nine interaction graphs by looking at the evaluation metrics mentioned above. In the following we summarise the main findings.

\paragraph{\textbf{Diverse evaluator populations estimate performance more accurately.}} 
As shown in Figure ~\ref{fig:first} the perfomance of the 90 populations evaluated by an evaluator population with high effective diversity matches the ground truth (absolute) performance better than that of a low diversity evaluator population. This effect is even stronger as the game increases in complexity (see results on GMM-RPS(7) in Figure~\ref{fig:gmm-rps7} in the appendix). These results aligns with our claims in Section~\ref{sec:evaluation}.

\begin{table}[h]
    \centering
    \begin{tabular}{lccc}
    \toprule
    Method & Performance & Coverage & Diversity \\  
    \midrule
          All-to-all & 47\% & 89\% &  39\% \\
         Hier. Cycle & 46\% & 74\% &  29\% \\
               Cycle & 30\% & 74\% &  12\% \\
          Rect. Nash & 43\% & 63\% &  3\% \\
           Self-play & 44\% & 53\% &  0\% \\
         \bottomrule
    \end{tabular}
    \caption{Results of the SC2 experiments.}
    \label{tab:sc2_details}
\end{table}

\paragraph{\textbf{Diverse learner populations perform better.}} 
As postulated in Section~\ref{sec:learner} high effective diversity seems to be correlated with strong performance. This is reflected for both GMM-RPS(3) and Blotto in Figure~\ref{fig:first} and for GMM-RPS(7) in Figure~\ref{fig:gmm-rps7} in the appendix.

\paragraph{\textbf{Graphs influence the training behaviour of populations.}} 

The training trajectories that result from training on different interaction graphs vary drastically as shown in Figure~\ref{fig:second} for GMM-RPS(3) (more trajectories as well as the trajectories on GM-RPS(7) can be found in Figure~\ref{fig:big_trajectories} in the appendix). Given the simplicity of the game most populations display one of two possible behaviours: the agents either all synchronise within a population or they cover different areas of the action space independently. 
In general, interaction graphs that allow for cyclic training interactions cover all modes, while those that don't contain cycles end up cycling.

\paragraph{\textbf{Graphs influence the effective diversity of populations.}} 
We can further quantify this behaviour by measuring the average effective diversity obtained by populations trained on the different interaction graphs. The right hand plots in Figure~\ref{fig:second} confirm the trend observed in the trajectory plots to the left, whereby interaction graphs that can contain cyclic relations between agents have a larger spread across action space. For GMM-RPS and Blotto this spread also translates to higher effective diversity.

\textbf{Graphs with cycles encourage specialisation and increase effective diversity in simple non-transitive games.} As reflected in the training trajectories (Fig.~\ref{fig:second}) the directed nature of cycles allows individual agents to focus on a subset of the population (that does not necessarily focus on them) and thus to specialise. Populations trained with undirected graphs, on the other hand, tend to collapse to the same strategy as the symmetry in the connections means agents have the same objective. As a result populations trained with cyclic interaction graphs have higher effective diversity (Fig.~\ref{fig:second}B).
    
\textbf{A fixed graph structure is powerful when it matches the underlying game structure, otherwise adapting graphs might be a better choice.} The hierarchical cycle, for example, works well on the RPS-like games as it matches the underlying structure. It does not, however, perform as well on Blotto which has a richer strategy structure (compare effective diversity (Fig.~\ref{fig:second}) and RPP (Fig.~\ref{fig:first}) across games). The adaptive graphs, on the other hand, find a good approximation in either case.
    
\textbf{Focusing on those that are better than you makes you less exploitable and focusing on those that are worse than you makes you a better exploiter.} Focusing on opponents that beat you involves learning best response against a more diverse set of strategies which encourages agents to be more robust. Playing against those you are beating already, on the other hand, allows agents to specialise. As a result populations might become more exploitable as they might get stuck on weak enemies. The rectified Nash interaction graph could be one way of remedying this, as agents only specialise on other agents that have support in the Nash.
    
\textbf{Individual convergence is not as important as population-level convergence for diversity and coverage.} Individual agents may cycle as long as all important strategies are covered. 
We can see this in Figure~\ref{fig:second}C  where the `One cycle' interaction graph has low individual convergence but good coverage of the three GMM-RPS modes at all times.

\textbf{When moving to significantly more complex environments some fundamental insights hold, but some do not.} 
We have chosen simple games as a starting point for our analysis. While most insights hold across these games, they may not translate to significantly more complex games such as StarCraft. 
In fact, some intuitions, e.g. the usefulness of directed graphs, the fact that the wrong fixed graph can hinder learning or that focusing on agents you beat allows you to specialise seem to agree with the results obtained on StarCraft. However, it is also clear that one should be careful to translate graphs or particular methods directly from very simple environments to more complex ones. Stark difference in game dynamics might lead to unexpected failure modes (e.g. the collapse of rectified Nash onto a single Nash agent that it can't recover from) or unforeseen successes~(e.g. the ability of 'All-to-all' to explore the strategy space). 

\begin{figure}[htb]
\centering
\includegraphics[width=0.8\textwidth]{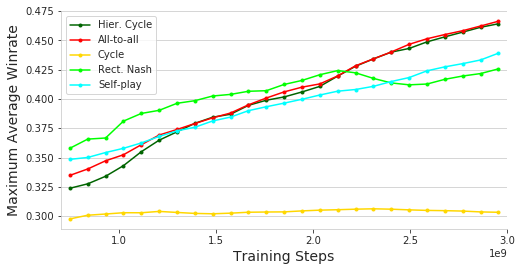}
\caption{Learning curves of 4-agent populations over training on StarCraft II.}
\label{fig:sc2}
\end{figure}

\subsection{Outlook}

A large part of current machine learning algorithms rely on optimisation. As a result, the community has developed a powerful repertoire of tools -- such as gradient descent, reinforcement learning, and evolutionary algorithms -- for minimizing losses and maximizing discounted future rewards. Multi-agent systems break this paradigm, because there is \emph{no longer a fixed objective}. Performance depends on the behavior of the other agents or opponents in the system and there are often many ways of behaving well. 

It is therefore necessary to design algorithms that search the space of possible behaviors and find diverse, effective strategies. 
In this paper, we have explored population-based learning algorithms where the objectives of agents in the population are specified by \emph{interaction graphs}. 
We find that training with certain graphs consistently yields populations of agents that are both stronger and more diverse than naive baselines.  
Interactions graphs provide a basic framework for reasoning about effective population-level objectives that encourage diversity and improve overall performance.

\bibliographystyle{plain}

\clearpage

\section{Appendix}
\subsection{Definitions}

\paragraph{\textbf{Nash on empirical games}} 
Given an empirical evaluation matrix $\mathbf{A}$ as above, define the Nash as the minmax solution to matrix game:
\begin{equation}
    \min_{\mathbf p}\max_{\mathbf q} {\mathbf p}^\intercal {\mathbf A}{\mathbf q},
\end{equation}
where we write $({\mathbf p}^*, {\mathbf q}^*)$ for the corresponding distributions.
In general there could be more than one pair of distributions that form a Nash equilibrium. There does exist a unique \emph{maxent} Nash equilibrium with nice properties, see~\citep{balduzzi2018re} for details. In practice, we find it convenient to use the Nash returned by an LP solver. If the matrix is \emph{antisymmetric} (which occurs for example when ${\mathbf A}$ is the empirical evaluation matrix of a population playing against itself) then it is shown in~\citep{balduzzi2018re} that there are symmetric Nash equilibria of the form $({\mathbf p}^*,{\mathbf p}^*)$ -- i.e. where the Nash involves both meta-players choosing the same distribution.

\subsection{GMM-RPS Environment}
\label{sec:rps-gmm}

The motivation behind this version of continuous RPS is to create an environment that has both a cyclic as well as a transitive component to it. 
The game consists of a 2D plane with three equidistant bivariate Gaussians that represent rock, paper and scissors respectively. A strategy corresponds to a point on that plane, defined by a pair of 2D coordinates. In Figure~\ref{fig:gmm_rps} we visualise three trategies A-C on the 2D plane along with their coordinates. These 2D coordinates are translated into rock-paper-scissors (RPS) proportions by measuring the pdf under each of the three Gaussians for every point (the  RPS weights for A-C are shown on the right). In this particular setup A is predominantly a rock-strategy, B an equal mixture between rock, paper and scissors and C an equal mixture between rock and paper.
GMM-RPS(3) deviates from continuous RPS in two ways:

\begin{enumerate}
\item We define a non-linear mapping from $\mathbb{R}^2$ to $\mathbb{R}^3$ between the agent's actions and the continuous RPS strategies. The idea behind this is to obtain some local optima around the pure strategies of rock, paper and scissors. More specifically, we define the action space to be a 2 dimensional plane and place three equidistant bivariate Gaussians on this plane that each represent one of the pure strategies (rock, paper and scissors). Every action is therefore a point on the plane that is mapped into the $\mathbb{R}^3$ space of continuous RPS by measuring the probability density function under each of the three Gaussians (see Fig.~\ref{fig:gmm_rps}.
\item We also add a transitive component to the game by defining the game matrix to be:

\begin{equation}
    \begin{bmatrix}
    0.5 & 1 & -1 \\
    -1 & 0.5 & 1 \\
    1  & -1 & 0.5
    \end{bmatrix}
\end{equation}

the 0.5 on the diagonal means that a strategy with a stronger rock, for example, will beat a weaker rock.
\end{enumerate}

GMM-RPS(7) is equivalent but with seven Gaussians instead of three so the mapping is from $\mathbb{R}^2$ to $\mathbb{R}^7$ and every strategy loses against half of the remaining strategies and beats the rest.


\subsection{Additional Results}

\subsubsection{GMM-RPS(7)}
We run the same experiments that were carried out for GMM-RPS(3) on GMM-RPS(7). As in the main text we report the average effective diversity obtained with each interaction graph as well as a plot of the RPP against effective diversity with different evaluator populations (see Figure~\ref{fig:gmm-rps7}). 
The results obtained on this environment match those observed with GMM-RPS(3) whereby graphs that allow for cycles have a higher diversity and higher diversity is correlated with higher RPP. 
As before, evaluator populations that are more diverse themselves are better at estimating the performance of leaner populations than non-diverse ones.

\begin{figure}[h]
\centering
\includegraphics[width=0.5\columnwidth]{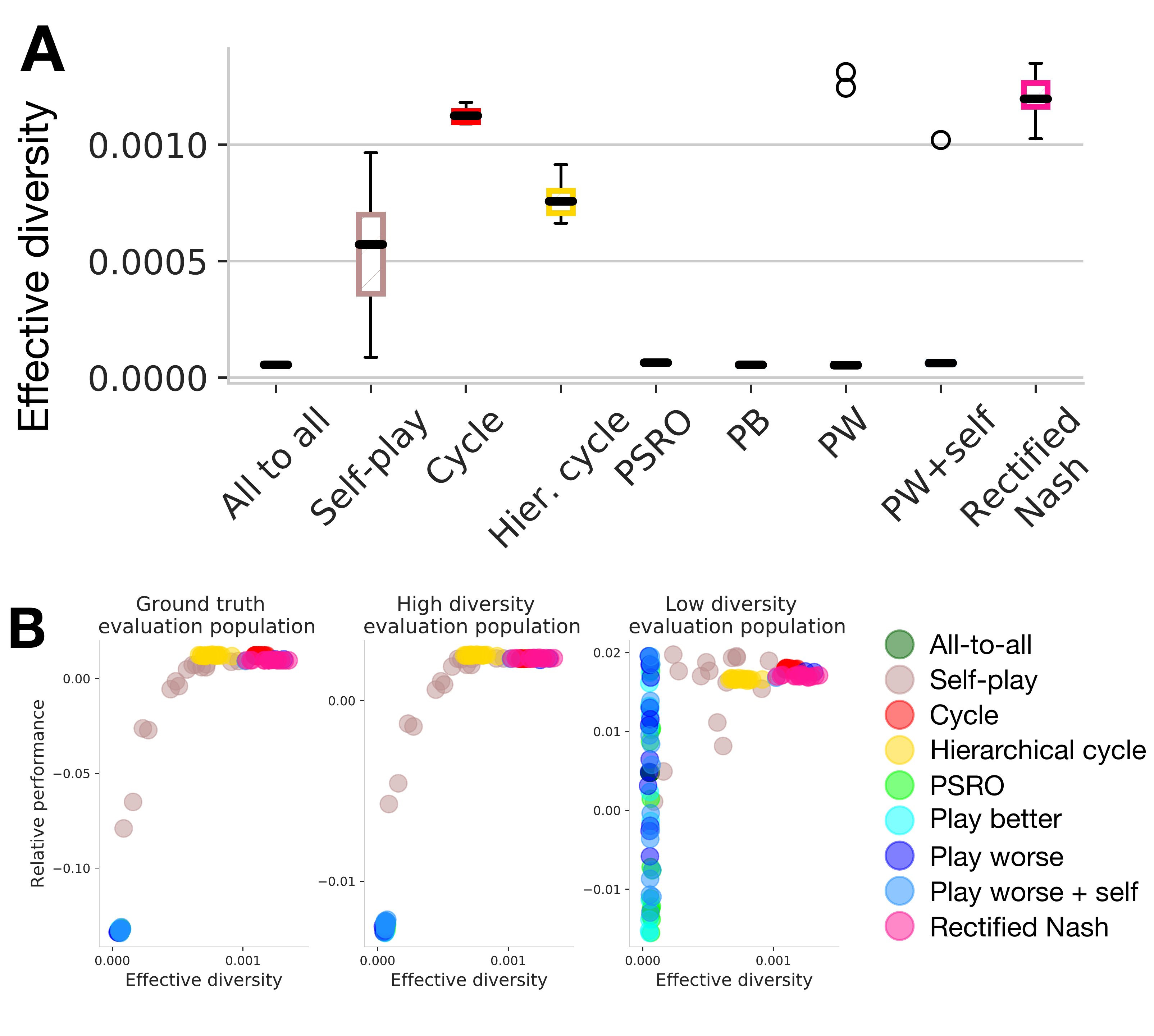}
\caption{\textbf{A: Effective diversity and B: RPP vs effective diversity for nine different inteaction graphs on GMM-RPS(7)}. }
\label{fig:gmm-rps7}
\end{figure}

\subsubsection{Diversity in StarCraft}
We visualise the diversity obtained with different interaction graphs by plotting the marginal distribution of units produced by each agent in Fig.~\ref{fig:sc2_units}.

\begin{figure*}[h]
\centering
\includegraphics[width=7cm]{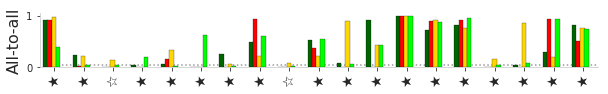}
\includegraphics[width=7cm]{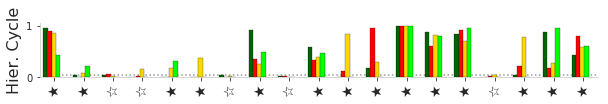}\\
\includegraphics[width=7cm]{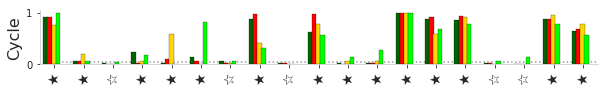}
\includegraphics[width=7cm]{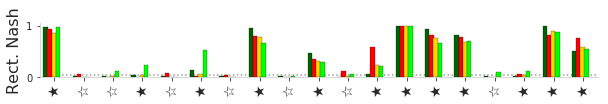}\\
\includegraphics[width=7cm]{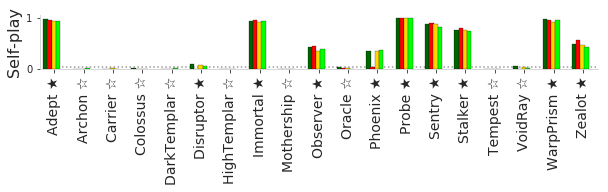}
\caption{Visualisations of $\phi_\mathrm{units}$, marginal distribution of units produced by each agent in the population. Colors correspond to each separate agent, and each bar represents probability of constructing a specific Protoss unit in last 7e8 training steps. Dotted line corresponds to 5\% cutoff from definition of coverage. Black star next to unit name means that it is considered present in terms of coverage definition, and white star -- that it is not.}
\label{fig:sc2_units}
\end{figure*}

\subsubsection{Additional trajectories for GMM-RPS}
We show the training trajectories for several repetitions of training a population of agents with the different interaction graphs in Fig.~\ref{fig:big_trajectories}. 
In these plots every row corresponds to a different experiment and every column to a different interaction graph.

\begin{figure*}[h]
\centering
\includegraphics[width=0.8\textwidth]{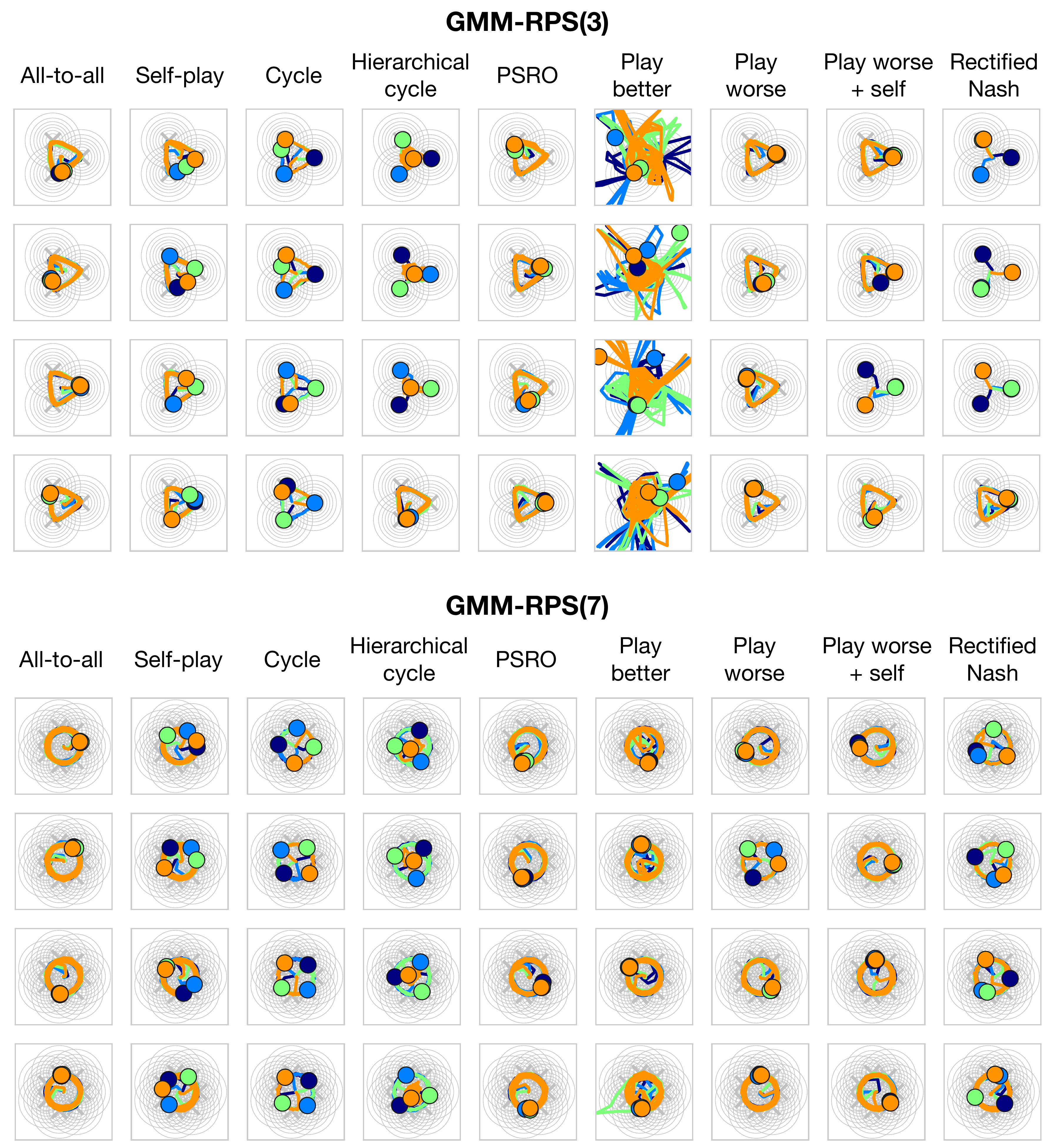}
\caption{\textbf{The trajectories of the 4-agent populations over training on GMMRPS-(3) and GMMRPS-(7) for different interaction graphs}. Each agent's trajectory is visualised using a different colour and the final training step of each is marked by a circle. Each figure corresponds to one populations. Populations in the same column are trained using the same interaction graph. The rows are different repetitions of the population training.}
\label{fig:big_trajectories}
\end{figure*}

\end{document}